\documentclass{article}

\usepackage[preprint]{neurips_2026}
\usepackage[utf8]{inputenc}
\usepackage{amsmath,amssymb,amsfonts}
\usepackage{amsthm}
\usepackage{thmtools,thm-restate}
\usepackage{bm}
\usepackage{mathtools}
\usepackage{graphicx}
\usepackage{booktabs}
\usepackage[colorlinks=true, linkcolor=blue, citecolor=blue, urlcolor=blue]{hyperref}

\usepackage{multirow}
\usepackage{tabularx}
\usepackage{subcaption}
\usepackage{array}    
\usepackage[dvipsnames]{xcolor} 

\newtheorem{theorem}{Theorem}[section]

\theoremstyle{definition}
\newtheorem{definition}[theorem]{Definition}

\theoremstyle{remark}
\newtheorem{remark}[theorem]{Remark}

\newcommand{\clip}{\textrm{clip}}
\DeclareMathOperator*{\argmin}{arg\,min}
\DeclareMathOperator*{\argmax}{arg\,max}

\newcommand{\ignore}[1]{}

\title{Signed Symmetric Quantization for Few-Bit Integers}
\author{Ian Colbert\thanks{Equal contribution. Correspondence to: \texttt{\{ian.colbert, eashan.dash\}@amd.com}.}, Eashan Dash\footnotemark[1], Pablo Monteagudo-Lago, Juan Amboage,\\ \textbf{Srinidhi N, Giuseppe Franco, Nicholas J. Fraser, Arun Ramachandran} \\
AMD}

\begin{document}

\maketitle

\begin{abstract}
The signed integer alphabet contains one more negative representable value than positive. Yet, by convention, the standard symmetric integer quantizer fixes its scale to be strictly positive, which assigns this extra representable value to the negative tail and can force clipping of positive outliers. In this work, we show that, at few-bit precision, such clipping is a non-trivial source of quantization error.
Asymmetric quantization addresses this problem with a zero point, shifting the grid toward the observed data range; however, this flexibility is well-known to carry a runtime penalty. For example, in \texttt{llama.cpp} on an AMD EPYC\texttrademark{} ``Turin'' CPU, a 4-bit symmetric format uses up to $9\%$ less memory with up to $2.45\times$ higher throughput than its asymmetric counterpart.
We highlight signed symmetric quantization as a third option that retains the runtime profile of symmetric quantization without the penalty of the asymmetric format: our signed absmax grid places the extra representable value on the dominant-outlier tail through a principled and lightweight sign selection rule while keeping the zero point at zero.
Our theoretical analysis offers two main results. First, we establish the signed absmax grid as conditionally bound-optimal on $\ell_2$ quantization error, and show that the condition holds for $88$--$99\%$ of weight groups across pre-trained large language models (LLMs) at low bit widths. Second, we show that negating the scale of a standard symmetric quantizer is analytically equivalent to a unit zero point shift on the same signed integer alphabet.
We empirically validate our proposal on models from the Qwen3, Qwen3.5, and Llama3 families, and observe improvement in perplexity and downstream few-shot accuracy over the standard unsigned symmetric quantizer at no extra inference cost.
\end{abstract}

\section{Introduction}
\label{sec:intro}
Few-bit quantization relies heavily on uniform integer grids uniquely defined by an alphabet, a scale factor, and (optionally) a zero point. A small but often overlooked detail is that the standard signed $q$-bit alphabet $\mathcal{A}_q = \{-2^{q-1}, \ldots, 2^{q-1}-1\}$ is not symmetric: it contains one more negative value than positive. 
Yet, by convention, the standard symmetric quantizer takes the scale $s$ to be positive~\cite{llama_cpp,gholami2022survey,brevitas}, assigning the extra representable value to the negative tail of the data and thereby possibly clipping large positive values. We show in this work that this can become a material source of quantization error at few-bit precision.

Asymmetric quantization introduces a zero point to align the grid with the data range, but this flexibility is well-known to carry an inference cost~\cite{gholami2022survey, jacob2018quantization, krishnamoorthi2018quantizing}. Kernels must account for the offset, and the corresponding metadata must be stored and loaded. For example, on Llama3 8B, we find that a symmetric 4-bit format uses 9\% less memory and provides $2.21\times$ higher prefill and $2.03\times$ higher decode throughput than its asymmetric counterpart as measured on an AMD EPYC\texttrademark{} ``Turin'' CPU (Table~\ref{tbl:cost}). This motivates a narrower question: can we recover the useful endpoint alignment of an asymmetric grid while keeping the inference path operationally symmetric?

We answer this question with signed symmetric quantization, where we choose the sign of the scale to align the dominant outlier in the data with the extra negative representable value in the signed integer alphabet. The resulting sign rule is closed-form, data-free, and metadata-free when scales are already stored as signed real values, as in \texttt{llama.cpp}. Our analyses show that signed symmetric quantization recovers a restricted asymmetric alignment while preserving the symmetric deployment path. In particular, on Llama3 8B at 2 bits, it reduces WikiText2 perplexity from $103.1$ to $17.8$ and improves few-shot accuracy recovery by $7.9$ percentage points (Table~\ref{tbl:llama8b_full}), while retaining $1.24\times$ prefill, $1.08\times$ decode, and $14\%$ memory advantages over asymmetric quantization (Table~\ref{tbl:cost}).

\paragraph{Contributions.}
Our contributions are four-fold.
\textbf{First,} we define signed symmetric quantization and introduce a principled, lightweight sign selection rule that preserves the dominant outlier (Section~\ref{sec:signed_sym}). \textbf{Second,} we prove a worst-case $\ell_2$ error bound that yields conditional bound-optimality of our proposed sign selection rule (Theorem~\ref{thm:worst-case-bound} and Corollary~\ref{cor:optimal-sign}). \textbf{Third,} we prove that a scale sign flip is analytically equivalent to a unit zero-point shift on the same signed alphabet (Theorem~\ref{thm:signed-asymmetric}). \textbf{Fourth,} we evaluate models in the Qwen3, Qwen3.5, and Llama3 families, showing that signed symmetric quantization improves over the conventional strictly positive symmetric grid, especially at low bit widths (Tables~\ref{tbl:sign_isolation},~\ref{tbl:llama8b_full}, and~\ref{tbl:llama8b_learned}), while preserving the symmetric deployment profile (Table~\ref{tbl:cost}).

\paragraph{Notation.} Let $w \in \mathbb{R}^{d}$ denote a $d$-dimensional weight vector, with elements \( w_j \in \mathbb{R} \) for \( j \in [d] \), where \( [d] = \{1, \ldots, d \} \).
A {quantizer} {\( \mathcal{Q} \)} maps real values to a discrete \( q \)-bit alphabet \(\mathcal{A}_q \subset s \mathbb{Z} \) with a step size (or resolution) of \( s \in \mathbb{R} \), where \( \vert \mathcal{A}_q \vert = 2^q \).
In particular, the standard integer quantizer is
\begin{equation}
\mathcal{Q}(w) = s \cdot \left( \text{clip}\Big(\left\lfloor \frac{w}{s} + z \right\rceil; \mathcal{A}_q \Big) - z \right),
\label{eq:quantizer}
\end{equation}
where \( s \) is the non-zero scaling factor, \( z \in \mathbb{R} \) is the zero point, \( \lfloor \cdot \rceil \) is the round-to-nearest (RTN) operator, and \( \text{clip}(x;\mathcal{A}_q) := \max ( \min(x, \max \mathcal{A}_q), \min \mathcal{A}_q) \).
Different integer quantization schemes (\textit{e.g.}, asymmetric or symmetric) are distinguished by how \( s,  z, \) and \( \mathcal{A}_q \) are chosen, and we refer to unique selections of \( s,  z, \) and \( \mathcal{A}_q \) as a grid.
\section{Background and Related Work}
\label{sec:related-work}

Quantization seeks to map a full-precision tensor onto a discrete, low-precision grid while minimizing reconstruction error, often in the form $\|w - \mathcal{Q}(\tilde{w})\|_2^2$. Prior work attacks this objective along two complementary axes: (i) for a fixed grid, search for the assignment $\tilde{w}$ that minimizes the error; (ii) for a fixed $\tilde{w}$, search for the grid that minimizes the error.
The two are not mutually exclusive, as we demonstrate empirically in Section~\ref{sec:results}.

\paragraph{Optimizing $\tilde{w}$ for a fixed grid.}
Frantar et al.~\cite{frantar2022gptq} introduce GPTQ, a greedy sequential rounding scheme that uses approximate second-order information from calibration data to update weights on a fixed grid, correcting for local quantization error. More recently, Zhang et al.~\cite{zhang2026qronos} introduce Qronos, which generalizes this idea to also account for quantization error propagated from previously quantized layers.
A complementary line of work transforms the weight tensor before rounding: Xiao et al.~\cite{xiao2023smoothquant} rescale channels to migrate quantization difficulty from activations to weights, Ashkboos et al.~\cite{ashkboos2024quarot} apply Hadamard rotations to suppress outliers, and Liu et al.~\cite{liu2025spinquant} extend this with learned rotations.
This family of algorithms is not the focus of our work, but we show that signed symmetric quantization composes constructively with it (Section~\ref{sec:results}).

\paragraph{Optimizing the grid for a fixed $\tilde{w}$.}
Esser et al.~\cite{esser2019learned} learn scaling factors via gradient descent, extended by Bhalgat et al.~\cite{bhalgat2020lsq+} to learnable zero points. In LLM quantization, however, the scale is typically derived from data range using heuristics~\cite{gholami2022survey, krishnamoorthi2018quantizing, zhang2026qronos}, or by greedily minimizing local quantization error~\cite{frantar2022gptq}. Recent works instead optimize the scale to minimize local output error without explicitly constraining its sign, yet in practice restrict evaluation to positive solutions~\cite{amboage2026piso, zhang2026beacon}. In any case, the scale is assumed (implicitly or explicitly) to be strictly positive; this convention is widespread but not a theoretical requirement.
While some implementations differ on whether to use the full representation range of the format~\cite{brevitas} or restrict to a symmetric subset (\textit{i.e.}, narrow range)~\cite{llama_cpp,brevitas}, this choice determines \emph{whether} the extra negative level is used, not \emph{which tail} receives it.
To our knowledge, no prior work treats the sign of the scale factor as an explicit degree of freedom. Our contribution therefore sits in this second strategy: we provide the first theoretical analysis and systematic evaluation of signed scale factors for few-bit integer quantization.

\section{Signed Symmetric Quantization}
\label{sec:signed_sym}

Signed symmetric quantization exploits the native asymmetry of the signed $q$-bit alphabet: it has one more negative value than positive. Standard symmetric quantization fixes the scale to be positive by convention, which always assigns this extra representable value to the negative tail, possibly clipping large positive values. This is often negligible at higher precision, but this clipping can dominate quantization error as the bit width $q$ is reduced, as later made explicit in Theorem~\ref{thm:worst-case-bound}.

Asymmetric quantization shifts the grid with a zero point, improving range alignment but changing the deployed format: kernels must handle the offset and store its metadata. We instead ask whether the signed alphabet's native asymmetry can be matched to the data while keeping $z = 0$. Signed symmetric quantization does exactly this, as we detail below.

\begin{table}[!t]
\centering
\caption{Inference cost of the symmetric grid relative to the asymmetric grid in \texttt{llama.cpp} on an AMD EPYC\texttrademark{} ``Turin'' 128-core CPU. Each cell reports the symmetric-to-asymmetric ratio of model memory footprint (GB), prefill throughput (tokens/s), and decode throughput (tokens/s) at the same precision. Values $<1$ for memory and $>1$ for throughput favor the symmetric format.}
\label{tbl:cost}
\resizebox{\textwidth}{!}{
\begin{tabular}{cc ccccc cccc ccc}
\toprule
\multirow{2}{*}{Format} & \multirow{2}{*}{Metric} & \multicolumn{5}{c}{Qwen3} & \multicolumn{4}{c}{Qwen3.5} & \multicolumn{3}{c}{Llama~3} \\
\cmidrule(lr){3-7} \cmidrule(lr){8-11} \cmidrule(lr){12-14}
 & & 0.6B & 1.7B & 4B & 8B & 14B & 0.8B & 2B & 4B & 9B & 1B & 3B & 8B \\
\midrule
\multirow{3}{*}{\texttt{Q4\_0} / \texttt{Q4\_1}} & memory ($\downarrow$)  & 0.93 & 0.92 & 0.91 & 0.91 & 0.91 & 0.94 & 0.93 & 0.92 & 0.91 & 0.93 & 0.92 & 0.91 \\
                    & prefill ($\uparrow$) & 1.23 & 1.65 & 1.51 & 1.73 & 2.45 & 1.10 & 1.58 & 1.26 & 1.59 & 1.49 & 1.77 & 2.21 \\
                    & decode ($\uparrow$)  & 1.00 & 1.22 & 1.55 & 2.15 & 2.07 & 0.98 & 1.12 & 0.98 & 1.41 & 0.98 & 0.90 & 2.03 \\
\midrule
\multirow{3}{*}{\texttt{Q2\_0} / \texttt{Q2\_1}} & memory ($\downarrow$)  & 0.90 & 0.88 & 0.87 & 0.86 & 0.85 & 0.92 & 0.91 & 0.88 & 0.87 & 0.90 & 0.87 & 0.86 \\
                    & prefill ($\uparrow$) & 1.03 & 1.13 & 1.19 & 1.25 & 1.29 & 1.16 & 1.39 & 1.02 & 1.08 & 1.17 & 1.22 & 1.24 \\
                    & decode ($\uparrow$)  & 1.00 & 1.30 & 1.11 & 1.13 & 1.25 & 1.01 & 1.03 & 0.97 & 1.06 & 1.03 & 1.27 & 1.08 \\
\bottomrule
\end{tabular}
}
\end{table}

\paragraph{Asymmetric quantizer.} Asymmetric quantization allows both the scale and zero point to be free parameters (\textit{i.e.}, $s,z \in \mathbb{R}$).
A common choice is the min-max grid,
\begin{equation}
s = \frac{\max(w) - \min(w)}{2^q - 1}, \qquad
z = -\frac{\min(w)}{s},
\label{eq:minmax}
\end{equation}
on the unsigned alphabet $\mathcal{A}_q := \{0, \ldots, 2^q - 1\}$.
\paragraph{Symmetric quantizer.}
Symmetric quantization fixes the zero point to zero (\textit{i.e.}, $z = 0$) and commonly defines the scale as the maximum absolute value (or absmax),
\begin{equation}
s = \frac{\|w\|_\infty}{2^{q-1}},
\label{eq:absmax}
\end{equation}
on the signed alphabet $\mathcal{A}_q := \{-2^{q-1}, \ldots, 2^{q-1} - 1\}$.
We refer to this as the absmax grid.

These grid choices imply different inference costs~\cite{gholami2022survey, jacob2018quantization, jain2020trained}. We can observe this distinction directly in GGUF models executed with \texttt{llama.cpp}~\cite{llama_cpp} on an AMD EPYC\texttrademark{} ``Turin'' CPU.
At 4-bit precision, \texttt{Q4\_1} defines an asymmetric quantizer per group of 32 elements, storing a 16-bit scale, a 16-bit zero point, and 32 4-bit weights per group; \texttt{Q4\_0} is the corresponding symmetric format, storing only a scale per group of 32 4-bit weights.
We implement analogous 2-bit formats, \texttt{Q2\_1} and \texttt{Q2\_0}, to further characterize the low-bit regime; these are experimental formats introduced for this study, with implementation details and further discussion deferred to Appendix~\ref{sec:implementation}.
Table~\ref{tbl:cost} shows that the symmetric formats use significantly less memory on larger models (up to $9\%$ at Q4 and up to $15\%$ at Q2) and consistently improve prefill throughput ($1.10$--$2.45\times$ at Q4 and $1.02$--$1.39\times$ at Q2), with decode throughput mostly following the same trend.
This is the price of the flexibility of the asymmetric format, and it motivates a quantizer that retains the runtime profile of a symmetric format while (partially) recovering the endpoint alignment of the asymmetric format.

\paragraph{Signed symmetric quantizer.}
We define signed symmetric quantization as the family of uniform integer quantizers on the signed alphabet $\mathcal{A}_q = \{-2^{q-1}, \ldots, 2^{q-1}-1\}$ with zero point $z = 0$ and non-zero scale (\textit{i.e.}, $s \neq 0$). Given scale magnitude $\alpha > 0$ and sign $\gamma \in \{-1, +1\}$, let $s := \gamma\alpha$ and
\begin{equation}
\mathcal{Q}_\gamma(w) := \mathcal{Q}(w;\, \gamma\alpha,\, 0) = \mathcal{Q}(w;\, s,\, 0).
\label{eq:signed_sym_quant}
\end{equation}
Note that the standard symmetric quantizer is a degenerate case of the signed symmetric quantizer with $\gamma = +1$. Furthermore, allowing $\gamma = -1$ does not change the grid resolution; it only determines which tail receives the extra representable endpoint in the signed alphabet.

\paragraph{The signed absmax grid.} Within this family, we introduce the signed absmax grid. Let $M := \|w\|_\infty > 0$ and $\alpha := M / 2^{q-1}$. Choose
\begin{equation}
\gamma^\star = -\operatorname{sign}(w_{i^\star}),
\qquad i^\star \in \argmax_{i \in [d]} |w_i|,
\label{eq:sign_rule}
\end{equation}
where $i^\star$ indexes the largest-magnitude coordinate, with deterministic tie-breaking.
Intuitively, this sign choice maps the largest representable value in $\mathcal{A}_q$ to the dominant outlier in $w$. In particular,
\[
\mathcal{Q}_{\gamma^\star}(w_{i^\star}) \;=\; \gamma^\star\alpha \cdot (-2^{q-1}) \;=\; -\gamma^\star M \;=\; \operatorname{sign}(w_{i^\star})\, M \;=\; w_{i^\star}.
\]
Hence, the dominant outlier is represented exactly. The opposite sign choice would instead place the extra endpoint on the other tail, potentially clipping the largest-magnitude coordinate. Indeed, in Section~\ref{sec:theoretical-analysis}, we prove that our sign choice conditionally minimizes worst-case quantization error (Corollary~\ref{cor:optimal-sign}); a condition that holds for 88--99\% of evaluated LLM weight groups (Remark~\ref{rem:cardinality-margin}).

This is the key distinction from min-max asymmetric quantization. Signed absmax does not attempt to cover the entire interval $[\min w, \max w]$. Instead, it uses the asymmetry already present in the signed alphabet to place the exact endpoint on the tail containing the dominant outlier, while retaining the runtime profile measured above for symmetric formats. Section~\ref{sec:theoretical-analysis} analyzes this choice, and Section~\ref{sec:results} evaluates how the proposed endpoint alignment translates into model-level accuracy.

\section{Theoretical Analysis}
\label{sec:theoretical-analysis}

In Section~\ref{sec:signed_sym}, we defined the signed symmetric quantization family and argued informally that the signed absmax grid reduces clipping error by aligning the representable endpoint with the dominant outlier.
We now formalize this intuition.

We first define the clipped set (Definition~\ref{def:clipped-set}), then decompose the $\ell_2$ error to isolate the two roles of the scale: (1) its magnitude governs grid resolution and rounding error, while (2) its sign governs grid polarity and clipping error (Lemma~\ref{lem:error-separability}).
We bound the $\ell_2$ error in terms of the clipped set (Theorem~\ref{thm:worst-case-bound}), then prove that the signed absmax grid minimizes this bound under a mild cardinality condition (Corollary~\ref{cor:optimal-sign}).
We empirically verify this condition on real LLM weights and show that reducing the clipped set cardinality is an effective surrogate for reducing $\ell_2$ quantization error (Remark~\ref{rem:cardinality-margin}).
We close by showing that the signed symmetric quantization family is a non-trivial special case of the asymmetric quantization family (Theorem~\ref{thm:signed-asymmetric}).

Throughout our analysis, we assume weight vector \( w \in \mathbb{R}^d \) is quantized to the $q$-bit signed integer alphabet $\mathcal{A}_q := \{-2^{q-1}, \ldots, 2^{q-1}-1\}$ under round-to-nearest with step size $\alpha := M / 2^{q-1}$, where $M := \|w\|_\infty > 0$. We also assume $w_i/\alpha \notin \mathbb{Z} + 1/2$ for all $i \in [d]$ (a measure-zero condition under any continuous weight distribution) so that \( \lfloor -v \rceil = - \lfloor v \rceil\) for \( v \in \mathbb{R} \).
For brevity, all proofs are deferred to Appendix~\ref{sec:proofs}.

\begin{definition}[Clipped Set]
\label{def:clipped-set}For each \(\gamma \in \{-1,+1\}\), we define the \emph{clipped set} \(C_\gamma\) as the set of indices whose rounded elements are clipped under \(\mathcal{Q}_\gamma\) (Equation~\ref{eq:signed_sym_quant}) such that \(\lfloor w_i / (\gamma\alpha) \rceil \notin \mathcal{A}_q\). In particular, for the signed integer alphabet, weights are clipped when \( \gamma w_i > M - \alpha/2 \), and therefore
\begin{equation}
\label{eq:clipped-set}
C_\gamma = \bigl\{ i \in [d] : \gamma w_i > M (1 - 2^{-q}) \bigr\},
\end{equation}
and its complement \([d] \setminus C_\gamma\) is the set of indices whose weights are rounded without clipping.
\end{definition}

\begin{restatable}[Error Decomposition]{lemma}{lemseparability}
\label{lem:error-separability}
Let $R(w_i) := \alpha\lfloor w_i/\alpha\rceil$ denote rounding to the unbounded grid $\alpha\mathbb{Z}$. Define rounding error \( E(w) \) and clipping penalty \( \Delta_\gamma(w) \) as
\[
E(w) := \sum_{i=1}^d \bigl(w_i - R(w_i)\bigr)^2, \qquad
\Delta_\gamma(w) := \sum_{i \in C_\gamma} \Bigl[\bigl(w_i - \mathcal{Q}_\gamma(w_i)\bigr)^2 - \bigl(w_i - R(w_i)\bigr)^2\Bigr].
\]
Let $T := M(1 - 2^{-q})$. Then, $E(w)$ is independent of $\gamma$, and the $\ell_2$ error decomposes as
\begin{equation}
\label{eq:error-separability}
\|w - \mathcal{Q}_\gamma(w)\|_2^2 = E(w) + \Delta_\gamma(w) = E(w) + 2\alpha \sum_{i \in C_\gamma} (\gamma w_i - T).
\end{equation}
\end{restatable}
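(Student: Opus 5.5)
The plan is to work coordinate by coordinate. I first show that $\mathcal{Q}_\gamma(w_i)=R(w_i)$ for every $i\notin C_\gamma$. Then I compute the clipped value and the per-coordinate penalty for $i\in C_\gamma$ explicitly. Summing the two cases gives the decomposition.

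\textbf{Independence of $E$ from $\gamma$.} This is immediate, since $R$ is defined with the positive step $\alpha$ and does not involve $\gamma$.

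\textbf{Unclipped coordinates.} With $z=0$ and $s=\gamma\alpha$ we have $\mathcal{Q}_\gamma(w_i)=\gamma\alpha\,\clip(\lfloor w_i/(\gamma\alpha)\rceil;\mathcal{A}_q)$. The rounding-symmetry assumption $\lfloor -v\rceil=-\lfloor v\rceil$ gives $\lfloor w_i/(\gamma\alpha)\rceil=\gamma\lfloor w_i/\alpha\rceil$. So whenever the clip is inactive,
\[
\mathcal{Q}_\gamma(w_i)=\gamma^2\alpha\lfloor w_i/\alpha\rceil=R(w_i).
\]
The clip is inactive exactly for $i\notin C_\gamma$, by the definition of the clipped set. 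So $E(w)$ collects the full error on $[d]\setminus C_\gamma$, and $\Delta_\gamma$ is by construction the correction on $C_\gamma$. This proves the first equality $\|w-\mathcal{Q}_\gamma(w)\|_2^2=E(w)+\Delta_\gamma(w)$.

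\textbf{Clipped coordinates: the quantized value.} Take $i\in C_\gamma$, so $\gamma w_i>T=M-\alpha/2$. Since $|w_i|\le M$, we have $\gamma w_i\in(M-\alpha/2,M]$. Write $M/\alpha=2^{q-1}$. Then $\lfloor \gamma w_i/\alpha\rceil=2^{q-1}$, the endpoint boundary case being excluded by the half-integer assumption. In particular
\[
R(w_i)=\gamma\alpha\, 2^{q-1}=\gamma M .
\]
The rounded integer index is $\lfloor w_i/(\gamma\alpha)\rceil=2^{q-1}$, which lies outside $\mathcal{A}_q$ and is clipped to $2^{q-1}-1$. Hence
\[
\mathcal{Q}_\gamma(w_i)=\gamma\alpha(2^{q-1}-1)=\gamma(M-\alpha).
\]

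\textbf{Clipped coordinates: the penalty.} Set $u:=\gamma w_i$ and use $\gamma^2=1$. The per-coordinate penalty is
\[
(u-M+\alpha)^2-(u-M)^2=2\alpha(u-M)+\alpha^2=2\alpha\bigl(u-M+\alpha/2\bigr)=2\alpha(\gamma w_i-T).
\]
Summing over $C_\gamma$ gives the second equality.

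\textbf{Main obstacle.} The only delicate point is the sign bookkeeping when $\gamma=-1$. Specifically, one must verify that $\lfloor w_i/(\gamma\alpha)\rceil\notin\mathcal{A}_q$ is equivalent to $\gamma w_i>M-\alpha/2$. This needs three facts: rounding is odd under the stated half-integer assumption; the lower clip at $-2^{q-1}$ never binds, because $\gamma w_i\ge -M$ gives an index of at least $-2^{q-1}$; and the upper clip binds exactly when the index reaches $2^{q-1}$. I would state this equivalence explicitly as the first step, since Definition~\ref{def:clipped-set} asserts it without proof.
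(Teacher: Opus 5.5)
Your proposal is correct and follows essentially the same route as the paper's proof. Both arguments use odd rounding (via $u_i=\gamma w_i$) to get $\mathcal{Q}_\gamma(w_i)=R(w_i)$ off $C_\gamma$, then compute $R(w_i)=\gamma M$ and $\mathcal{Q}_\gamma(w_i)=\gamma(M-\alpha)$ on $C_\gamma$, which gives the penalty $2\alpha(\gamma w_i-T)$. Your extra check that the lower clip never binds, so that $\lfloor w_i/(\gamma\alpha)\rceil\notin\mathcal{A}_q$ is equivalent to $\gamma w_i>M-\alpha/2$, is a small addition the paper leaves implicit in Definition~\ref{def:clipped-set}.
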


Lemma~\ref{lem:error-separability} separates the two roles of the scale: (1) its magnitude $\alpha$ determines the grid resolution and rounding error $E(w)$, and (2) its sign $\gamma$ determines the grid polarity and clipping penalty $\Delta_\gamma(w)$.
The following theorem converts this exact expression to a deterministic worst-case bound.

\begin{figure}[t]
\centering
\includegraphics[width=\linewidth]{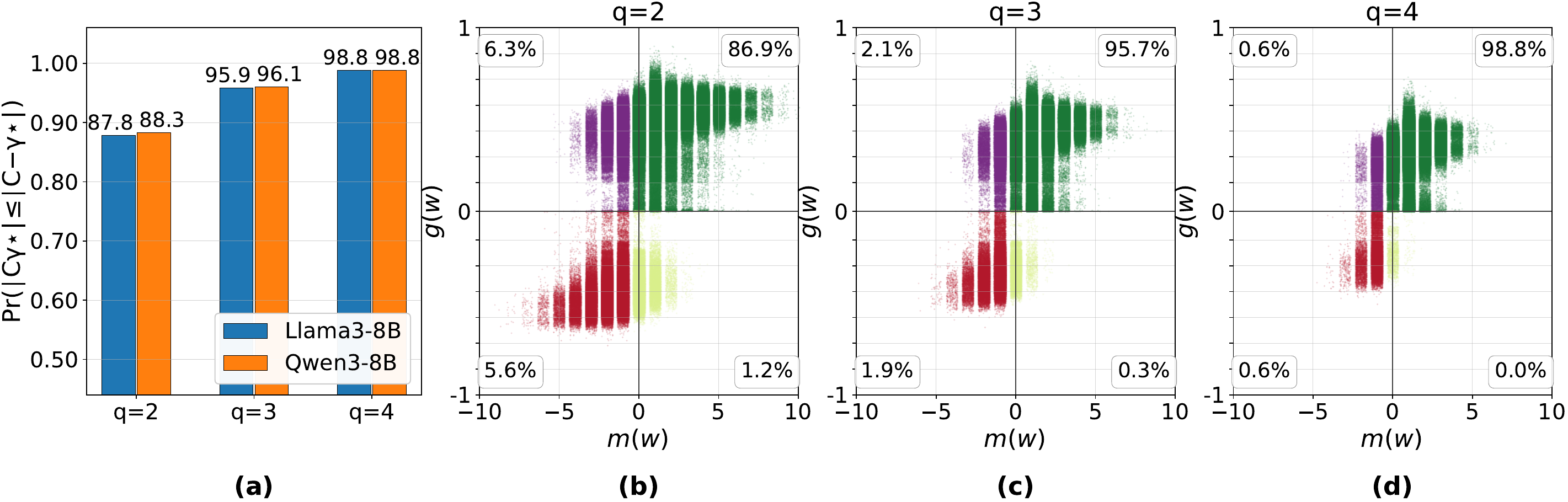}
\caption{Empirical validation of signed symmetric quantization on Llama3 8B and Qwen3 8B, quantizing all linear projection weights with group size 32 (over 200M groups per model; no rotations or error correction). \textbf{(a)} Per-model fraction of weight groups satisfying the cardinality condition in Corollary~\ref{cor:optimal-sign}, $|C_{\gamma^\star}| \le |C_{-\gamma^\star}|$, at bit widths $q \in \{2, 3, 4\}$. \textbf{(b)--(d)} For each $q \in \{2, 3, 4\}$ (one panel per bit width, indicated in the panel title), pooled scatter of cardinality margin $m(w) = |C_{-\gamma^\star}| - |C_{\gamma^\star}|$ against realized error gain $g(w) = \Delta_{-\gamma^\star}(w) - \Delta_{\gamma^\star}(w)$ (Lemma~\ref{lem:error-separability}). The cardinality condition holds for points right of $m = 0$; the sign rule strictly reduces $\ell_2$ error for points above $g = 0$. Colors encode the quadrant assignment of each point. Quadrant percentages are computed from exact full counts; the scatter displays a 1\% subsample for visual clarity.}
\label{fig:theory-diagnostics}
\end{figure}

\begin{restatable}[Worst-Case Error Bound]{theorem}{thmworstcase}
\label{thm:worst-case-bound}
Given \( w \in \mathbb{R}^d \), each per-coordinate error term satisfies
\[
\bigl(w_i - \mathcal{Q}_\gamma(w_i)\bigr)^2 \le \tfrac{\alpha^2}{4} \;\;\text{for }\, i \in [d] \setminus C_\gamma,
\qquad
\tfrac{\alpha^2}{4} < \bigl(w_i - \mathcal{Q}_\gamma(w_i)\bigr)^2 \le \alpha^2 \;\;\text{for }\, i \in C_\gamma.
\]
In particular, substituting $\alpha = M / 2^{q-1}$ and writing $\rho_\gamma = |C_\gamma|/d$, the total $\ell_2$ error satisfies
\begin{equation}
\label{eq:bound-rewrite}
\|w - \mathcal{Q}_\gamma(w)\|_2^2 \;\le\; 2^{-2q}\,M^2\,d\,(1 + 3\rho_\gamma).
\end{equation}
\end{restatable}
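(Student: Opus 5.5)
The plan is to work coordinatewise, using the fact that $\mathcal{Q}_\gamma(w_i) = \gamma\alpha\,\clip(\lfloor w_i/(\gamma\alpha)\rceil;\mathcal{A}_q)$. I will first reduce to $\gamma=+1$ by a sign change. Under the standing assumption $\lfloor -v\rceil = -\lfloor v\rceil$, we have $\lfloor w_i/(\gamma\alpha)\rceil = \lfloor \gamma w_i/\alpha\rceil$. Since $|\gamma|=1$, this gives
\[
|w_i - \mathcal{Q}_\gamma(w_i)| = |\gamma w_i - \alpha\,\clip(\lfloor \gamma w_i/\alpha\rceil;\mathcal{A}_q)|,
\]
so it suffices to analyze $v := \gamma w_i \in [-M,M]$ quantized with the positive scale $\alpha$.

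For indices $i \notin C_\gamma$, Definition~\ref{def:clipped-set} shows the rounded integer already lies in $\mathcal{A}_q$. The negative end cannot clip: $v/\alpha \ge -M/\alpha = -2^{q-1}$, so $\lfloor v/\alpha\rceil \ge -2^{q-1}$. Hence the clip is inactive, and the error is the plain RTN error $|v - \alpha\lfloor v/\alpha\rceil| \le \alpha/2$. Squaring gives the $\alpha^2/4$ bound.

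For indices $i \in C_\gamma$, we have $v > T = M - \alpha/2$ and $v \le M$. The quantized value is $\alpha(2^{q-1}-1) = M - \alpha$, so the error is $v - (M-\alpha) \in (\alpha/2, \alpha]$. Squaring gives $\alpha^2/4 < (\cdot)^2 \le \alpha^2$. One can cross-check this against Lemma~\ref{lem:error-separability}: within $C_\gamma$ the unclipped rounding of $v$ is $M$, with error $(M-v)^2$, and $(v-M+\alpha)^2 - (v-M)^2 = 2\alpha(v - T)$, which matches the clipping penalty.

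Summing then gives
\[
\|w-\mathcal{Q}_\gamma(w)\|_2^2 \le (d-|C_\gamma|)\tfrac{\alpha^2}{4} + |C_\gamma|\alpha^2 = \tfrac{\alpha^2}{4}\, d\,(1+3\rho_\gamma).
\]
Substituting $\alpha^2/4 = M^2 2^{-2(q-1)}/4 = 2^{-2q}M^2$ yields Equation~\ref{eq:bound-rewrite}.

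The only delicate step is the negative-end clipping argument, which shows that clipping can occur only on the positive side of $\gamma w$. This relies on the extra negative integer $-2^{q-1}$ exactly covering $-M$. I will state it explicitly rather than just citing the definition, since the definition asserts the characterization without proof.
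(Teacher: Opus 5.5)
Your proof is correct and follows the paper's argument essentially step for step. Like the paper, you reduce to $u_i=\gamma w_i$ with positive scale $\alpha$, bound each unclipped coordinate by the round-to-nearest error $\alpha/2$, and note that each clipped coordinate (which rounds to $2^{q-1}$ and is clipped to $M-\alpha$) has error in $(\alpha/2,\alpha]$; you then sum and substitute $\alpha^2/4=2^{-2q}M^2$. Your explicit check that $\gamma w_i \ge -M$ rules out clipping at the lower end is a detail the paper leaves implicit in ``no clipping occurs,'' and it is a welcome addition. Incidentally, the identity $\lfloor w_i/(\gamma\alpha)\rceil = \lfloor \gamma w_i/\alpha\rceil$ holds trivially because $1/\gamma=\gamma$, so you do not need the rounding-symmetry assumption at that step.
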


Theorem~\ref{thm:worst-case-bound} turns the decomposition identified in Lemma~\ref{lem:error-separability} into a worst-case objective for sign selection. Given $w$, Lemma~\ref{lem:error-separability} shows that the sign of the scale affects the $\ell_2$ error only through clipping. Theorem~\ref{thm:worst-case-bound} then upper bounds the total squared error by a quantity whose only $\gamma$-dependent term is the clipped fraction $\rho_\gamma = |C_\gamma|/d$. Therefore, under the deterministic worst-case bound, sign selection reduces to minimizing the size of the clipped set.

Importantly, as outliers are well-known to dominate few-bit quantization error~\cite{ashkboos2024quarot,liu2025spinquant}, worst-case error bounds are effective optimization targets for post-training quantization~\cite{sanjeet2026mixquant}.
The following corollary and remark build on this intuition to justify the signed absmax grid (Equation~\ref{eq:sign_rule}), both analytically and empirically: Corollary~\ref{cor:optimal-sign} characterizes when the proposed sign choice minimizes the worst-case bound, and Remark~\ref{rem:cardinality-margin} shows that this condition holds for 88--99\% of evaluated LLM weight groups, with a cardinality margin that is highly predictive of realized $\ell_2$ gains.

\begin{restatable}[Conditional Bound-Optimal Sign Choice]{corollary}{coroptimalsign}
\label{cor:optimal-sign}
Let \(i^\star \in \argmax_{i \in [d]} |w_i|\) index the dominant outlier in \(w\).
Define \(\gamma^\star := -\operatorname{sign}(w_{i^\star})\).
Since the worst-case bound from Theorem~\ref{thm:worst-case-bound} depends on  \(\gamma\) only through \(|C_\gamma|\),
\[
\argmin_{\gamma \in \{-1,+1\}} 2^{-2q}\,M^2\,d\,(1 + 3\rho_\gamma) \;=\; \argmin_{\gamma \in \{-1,+1\}} |C_\gamma|.
\]
Then, whenever \(|C_{\gamma^\star}| \le |C_{-\gamma^\star}|\), \(\gamma^\star\) is a minimizer of the worst-case bound.
\end{restatable}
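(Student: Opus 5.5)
The argument has three steps: show that the bound is a strictly increasing function of $|C_\gamma|$ alone, transfer the $\argmin$ through that function, and then read off the conditional claim for $\gamma^\star$.

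\textbf{Step 1: the bound depends on $\gamma$ only through $|C_\gamma|$.} Write the bound of Theorem~\ref{thm:worst-case-bound} as
\[
B(\gamma) := 2^{-2q} M^2 d\,(1+3\rho_\gamma) = 2^{-2q}M^2 d + 3\cdot 2^{-2q}M^2\,|C_\gamma|.
\]
The quantities $M=\|w\|_\infty$, $d$ and $q$ do not involve $\gamma$. In particular, the magnitude $\alpha = M/2^{q-1}$ is the same for both signs, so Theorem~\ref{thm:worst-case-bound} applies with the same constants for $\gamma=+1$ and $\gamma=-1$. Hence $B(\gamma)=a+b\,|C_\gamma|$ with $a = 2^{-2q}M^2 d$ and $b = 3\cdot 2^{-2q}M^2$. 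Since $M>0$, we have $b>0$.

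\textbf{Step 2: transfer the $\argmin$.} The map $x\mapsto a+bx$ is strictly increasing. Therefore, for $\gamma,\gamma'\in\{-1,+1\}$,
\[
B(\gamma)\le B(\gamma') \iff |C_\gamma|\le |C_{\gamma'}|.
\]
The two minimizing sets therefore coincide as subsets of $\{-1,+1\}$, which is the displayed equality in the corollary. Because the domain has only two elements, this is a direct comparison and no further argument is needed.

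\textbf{Step 3: the conditional claim.} Apply Step 2 with $\gamma=\gamma^\star$ and $\gamma'=-\gamma^\star$. If $|C_{\gamma^\star}|\le|C_{-\gamma^\star}|$, then $B(\gamma^\star)\le B(-\gamma^\star)$, so $\gamma^\star$ attains the minimum and is a minimizer of the worst-case bound. If the inequality is an equality, it is a (possibly non-unique) minimizer.

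Optionally, I would add a sanity remark explaining why the condition is natural. Under $\gamma^\star$, the outlier satisfies $\gamma^\star w_{i^\star}=-M<0$, so $i^\star\notin C_{\gamma^\star}$. Under the other sign, $-\gamma^\star w_{i^\star}=M>T$, so $i^\star\in C_{-\gamma^\star}$. The condition can therefore fail only when the opposite tail has strictly more entries above $T$.

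There is no real obstacle here. The only points needing care are that the positive prefactor keeps the map monotone, which uses $M>0$, and that $\alpha$ is unchanged between the two signs.
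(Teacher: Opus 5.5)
Your proposal is correct and follows essentially the same route as the paper: you write the bound as an affine function of $|C_\gamma|$ whose prefactor does not depend on $\gamma$ and whose slope is positive because $M>0$, so the two $\argmin$ sets coincide, and on the two-element domain $\{-1,+1\}$ the condition $|C_{\gamma^\star}| \le |C_{-\gamma^\star}|$ directly makes $\gamma^\star$ a minimizer. Your optional remark that $i^\star \notin C_{\gamma^\star}$ while $i^\star \in C_{-\gamma^\star}$ is also correct, and it is consistent with the structural properties of $\gamma^\star$ that the paper notes after Remark~\ref{rem:cardinality-margin}.
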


Corollary~\ref{cor:optimal-sign} links the practical sign rule from Section~\ref{sec:signed_sym} to the worst-case analysis in Theorem~\ref{thm:worst-case-bound}: whenever \(|C_{\gamma^\star}| \le |C_{-\gamma^\star}|\), the choice $\gamma^\star$ minimizes Equation~\ref{eq:bound-rewrite}.
This yields a directly interpretable optimization criterion: aligning the extra representable endpoint with the dominant outlier should not increase the number of clipped coordinates.
It also establishes the size of the clipped set $|C_\gamma|$ as a surrogate for the exact sign-dependent clipping penalty $\Delta_\gamma(w)$ in Lemma~\ref{lem:error-separability}.
Both relationships are directly measurable on LLM weight groups, allowing us to evaluate not only how often the condition holds, but also how well the cardinality margin predicts realized $\ell_2$ error reduction.

\begin{remark}[Cardinality Margin and Realized $\ell_2$ Error]
\label{rem:cardinality-margin}
Figure~\ref{fig:theory-diagnostics} evaluates the cardinality condition in Corollary~\ref{cor:optimal-sign} on real LLM weight groups by comparing the cardinality margin
\[
m(w) = |C_{-\gamma^\star}| - |C_{\gamma^\star}|
\]
with the realized error gain
\[
g(w) = \|w - \mathcal{Q}_{-\gamma^\star}(w)\|_2^2 - \|w - \mathcal{Q}_{\gamma^\star}(w)\|_2^2 = \Delta_{-\gamma^\star}(w) - \Delta_{\gamma^\star}(w),
\]
where the last equality follows from the decomposition in Lemma~\ref{lem:error-separability}. Thus, $m(w) \ge 0$ is exactly the condition under which Corollary~\ref{cor:optimal-sign} shows that $\gamma^\star$ minimizes the worst-case bound, while $g(w) > 0$ means that $\gamma^\star$ also reduces the realized $\ell_2$ error. Figure~\ref{fig:theory-diagnostics}(a) shows that on Llama3 8B and Qwen3 8B, the cardinality condition holds for roughly $88\%$, $96\%$, and $99\%$ of evaluated groups of 32 weights at $q \in \{2, 3, 4\}$, respectively. Panels (b)--(d) further show that when the cardinality margin strictly favors $\gamma^\star$, $\ell_2$ error is reduced in $98.6\%$, $99.7\%$, and $100.0\%$ of cases at $q \in \{2, 3, 4\}$, respectively. Thus, although substantial variation may exist within worst-case error bounds, the objective induced by Theorem~\ref{thm:worst-case-bound} is both analytically tractable and empirically predictive: on real LLM weights, reductions in the size of the clipped set consistently correspond to reductions in the exact clipping penalty and, via Lemma~\ref{lem:error-separability}, reductions in realized quantization error.
\end{remark}

We note that our choice $\gamma^\star = -\operatorname{sign}(w_{i^\star})$ also carries two structural properties that hold unconditionally, independent of whether the cardinality condition is met.
First, as detailed in Section~\ref{sec:signed_sym}, it represents the dominant outlier exactly: $\mathcal{Q}_{\gamma^\star}(w_{i^\star}) = w_{i^\star}$, since $|w_{i^\star}| = M$ coincides with the representable endpoint placed on its side.
Second, it confines clipping to the opposite tail: $C_{\gamma^\star} \subseteq \{i \in [d] : \gamma^\star w_i > 0\}$.
These properties help explain why the sign rule remains effective in practice even on the small fraction of groups where the condition fails (Figure~\ref{fig:theory-diagnostics}): preserving the dominant outlier exactly and confining clipping to the opposite tail make $\gamma^\star$ a natural choice when a clear dominant tail is present.
Tables~\ref{tbl:llama8b_full} and~\ref{tbl:llama8b_learned} corroborate this at the model level: the largest gains appear at 2 bits, precisely where the cardinality condition holds least often (Figure~\ref{fig:theory-diagnostics}).

Interestingly, asymmetric quantization pursues the same goal of aligning the grid endpoint with the dominant outlier, but does so by introducing a zero point that carries a non-trivial inference cost~\cite{jain2020trained, jacob2018quantization, gholami2022survey} (Table~\ref{tbl:cost}).
The next theorem shows that the signed symmetric quantizer is identical to a unit zero point shift on the same signed integer alphabet, showing that signed symmetric quantization is analytically a non-trivial instance of the family of asymmetric quantizers.

\begin{restatable}[Sign Flip as Unit Zero-Point Shift]{theorem}{thmsignedasymmetric}
\label{thm:signed-asymmetric}
Let \(\alpha > 0\). For any \(w \in \mathbb{R}\) with \(w/\alpha \notin \mathbb{Z} + \tfrac{1}{2}\), the quantizers \(\mathcal{Q}(\cdot; -\alpha, 0)\) and \(\mathcal{Q}(\cdot; \alpha, -1)\) induce the same representable grid. In particular,
\[
\mathcal{Q}(w; -\alpha, 0) = \mathcal{Q}(w; \alpha, -1),
\]
where both quantizers share the same signed integer alphabet with a grid resolution of \(\alpha\).
\end{restatable}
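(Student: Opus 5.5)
The plan is to compute both sides of the identity explicitly from Equation~\ref{eq:quantizer}, using the single integer $n := \lfloor w/\alpha \rceil$, and to show that each reduces to $\alpha \cdot \clip(n; \{-2^{q-1}+1, \ldots, 2^{q-1}\})$. This common expression also shows that both quantizers induce the same representable grid, namely $\alpha\{-2^{q-1}+1,\ldots,2^{q-1}\}$, with resolution $\alpha$. The argument is pointwise in $w$ and needs no earlier results. It uses three elementary facts about rounding and clipping:
\begin{itemize}
\item[(i)] $\lfloor -v \rceil = -\lfloor v \rceil$ whenever $v \notin \mathbb{Z}+\tfrac12$, which is exactly the hypothesis on $w/\alpha$;
\item[(ii)] $\lfloor v + k \rceil = \lfloor v \rceil + k$ for any integer $k$, which holds unconditionally;
\item[(iii)] for integers $a \le b$ and any $c$, $\clip(-x;[a,b]) = -\clip(x;[-b,-a])$ and $\clip(x - c;[a,b]) + c = \clip(x;[a+c,b+c])$.
\end{itemize}
Both parts of (iii) follow directly from the definition $\clip(x;\mathcal{A}_q)=\max(\min(x,\max\mathcal{A}_q),\min\mathcal{A}_q)$, because negation swaps $\min$ and $\max$ and translation commutes with both.

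For the left-hand side, substitute $s=-\alpha$ and $z=0$ to get $\mathcal{Q}(w;-\alpha,0) = -\alpha\,\clip(\lfloor -w/\alpha\rceil; [-2^{q-1}, 2^{q-1}-1])$. By (i) the rounded value is $-n$. Applying the first part of (iii) then gives $\clip(-n;[-2^{q-1},2^{q-1}-1]) = -\clip(n;[-2^{q-1}+1, 2^{q-1}])$, and the two minus signs cancel, leaving $\alpha\,\clip(n;[-2^{q-1}+1,2^{q-1}])$.

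For the right-hand side, substitute $s=\alpha$ and $z=-1$ to get $\mathcal{Q}(w;\alpha,-1) = \alpha\bigl(\clip(\lfloor w/\alpha - 1\rceil;[-2^{q-1},2^{q-1}-1]) + 1\bigr)$. By (ii) the rounded value is $n-1$. The second part of (iii) with $c=1$ turns the bracket into $\clip(n;[-2^{q-1}+1,2^{q-1}])$, which matches the left-hand side.

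There is no real obstacle; the calculation is routine. The only place needing care is the tie-breaking hypothesis. Fact (i) fails at half-integers under a fixed rounding convention, so I would note explicitly that the hypothesis $w/\alpha\notin\mathbb{Z}+\tfrac12$ is used only there. Fact (ii) needs no hypothesis because $z=-1$ is an integer. I would also double-check the sign bookkeeping in the clip-negation step, since that is where the alphabet's extra negative level is converted into an extra positive level.
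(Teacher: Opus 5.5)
Your proof is correct, but it runs in the opposite direction from the paper's. The paper argues geometrically. Absent midpoint ties, round-then-clip with parameters $(s,z)$ is the nearest-point map onto the finite grid $s(\mathcal{A}_q - z)$. So it suffices to verify the set identity $-\alpha\,\mathcal{A}_q = \alpha(\mathcal{A}_q + 1) = \alpha\{-2^{q-1}+1,\ldots,2^{q-1}\}$, and pointwise equality then follows from uniqueness of the nearest point. You instead compute both sides directly from Equation~\ref{eq:quantizer}, using the odd symmetry of rounding, integer-translation equivariance, and the negation/translation rules for $\clip$. You land on the explicit common form $\alpha\,\clip(\lfloor w/\alpha\rceil;\{-2^{q-1}+1,\ldots,2^{q-1}\})$, so the grid identity is a by-product rather than the key step.

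The paper's route is shorter and more conceptual: it shows at once that any two parameter pairs inducing the same grid give the same quantizer. However, it silently relies on the nearest-point characterization of round-then-clip. That characterization needs $\mathcal{A}_q$ to be a set of consecutive integers and the grid spacing to be $|s|$, including for negative $s$. Your route is self-contained from the definition and yields a closed form.

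One small correction to your bookkeeping about where the hypothesis is used. Which of (i) and (ii) actually needs it depends on the tie rule, which the paper never fixes. Round-half-up makes (ii) unconditional but breaks (i) at ties. Round-half-to-even and round-half-away-from-zero make (i) unconditional but break (ii); for example, under half-to-even $\lfloor 0.5\rceil + 1 = 1 \neq 2 = \lfloor 1.5\rceil$. This is harmless for your argument. The condition $w/\alpha \notin \mathbb{Z}+\tfrac12$ is invariant under integer shifts of $w/\alpha$, so it rules out ties in both steps. You should say that, rather than claim (ii) needs no hypothesis.
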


Theorem~\ref{thm:signed-asymmetric} shows that, at a shared step size $\alpha$, negating the scale under $z = 0$ is identical to setting $z = -1$ with a strictly positive scale on the same signed integer alphabet.
Signed symmetric quantization therefore admits two equivalent views: (1) operationally a symmetric quantizer ($z = 0$) that incurs no zero point overhead at inference, and (2) analytically an asymmetric quantizer with a unit zero point shift ($z = -1$) that absorbs the shift into the sign of the scale.

It is worth noting the relationship to the standard min-max asymmetric grid (Equation~\ref{eq:minmax}), which is not addressed by Theorem~\ref{thm:signed-asymmetric}: the min-max grid uses an unsigned alphabet and a scale derived from the full data range, \textit{i.e.}, $\max w - \min w$, rather than from $\|w\|_\infty$.
Both schemes preserve the dominant outlier exactly, but the min-max grid additionally preserves the opposite tail by covering $[\min w, \max w]$ exactly, eliminating clipping error and leaving rounding as the only source of error (Lemma~\ref{lem:error-separability}).
Furthermore, its step size depends on the full data range, so its relationship with $\alpha$ varies with the distribution of $w$.
An analytical comparison between the signed symmetric and min-max asymmetric grids is therefore non-trivial.
However, we compare them empirically in Section~\ref{sec:results}.

\section{Experimental Results}
\label{sec:results}

Sections~\ref{sec:signed_sym} and~\ref{sec:theoretical-analysis} show that the sign of the scale is a useful degree of freedom: by choosing the sign appropriately, the dominant outlier in the data can be mapped to the extra negative value in the signed integer alphabet rather than being clipped. We now evaluate whether this local choice of grid polarity yields benefits for end-to-end model accuracy.

We organize our evaluation in two parts. First, we isolate the sign rule by removing post-training quantization machinery (no rotations, error correction, or activation quantization) and show that the signed absmax grid improves over the conventional strictly positive absmax grid (Section~\ref{sec:theoretical-analysis}). Second, we use Llama3 8B as a representative case study to show that this improvement survives composition with state-of-the-art transformations and rounding methods.

\textbf{Models and datasets}. Our experiments span three model families: Qwen3~\cite{qwen3technicalreport}, Qwen3.5~\cite{qwen3.5}, and Llama3~\cite{grattafiori2024llama3}, all sourced as instruction fine-tuned checkpoints from Huggingface~\cite{wolf2020transformers} and used without modification. For calibration, we sample 128 random sequences of 2048 tokens from the WikiText2~\cite{merity2016pointer} training split and report perplexity on its test split. To evaluate downstream reasoning, we  report zero-shot accuracy via LightEval~\cite{lighteval} on ARC (Challenge and Easy)~\cite{arc}, HellaSwag~\cite{hellaswag}, PIQA~\cite{piqa}, and WinoGrande~\cite{WinoGrande}. We further report few-shot accuracy on MMLU-Redux (MMLU-R)~\cite{gema2025we} and GSM8K~\cite{cobbe2021training}.
We quantize all models using the Brevitas quantization library~\cite{brevitas}.

\paragraph{Sign rule in isolation.}
We first isolate the effect of the sign rule by direct casting all linear projection weights with group size 32, matching the \texttt{llama.cpp} block convention. This experiment uses no rotations, no error correction methods (\textit{e.g.}, Qronos), no activation quantization, and no calibration or optimization. Thus, the only difference between the absmax and signed rows in Table~\ref{tbl:sign_isolation} is whether the scale is constrained to be strictly positive or allowed to choose its sign per group.

\begin{table}[!t]
\centering
\caption{Sign rule in isolation. WikiText2 perplexity ($\downarrow$) and average zero-shot accuracy ($\uparrow$) across five reasoning tasks, ARC (Challenge and Easy), HellaSwag, WinoGrande, and PIQA, under direct casting. We compare the signed absmax grid (denoted \emph{signed}) against the strictly positive absmax grid (denoted \emph{absmax}). BF16 baselines provided as reference ($q = 16$).}
\label{tbl:sign_isolation}
\resizebox{\textwidth}{!}{
\begin{tabular}{cc ccccc cccc ccc}
\toprule
\multirow{2}{*}{$q$} & \multirow{2}{*}{Grid} & \multicolumn{5}{c}{Qwen3} & \multicolumn{4}{c}{Qwen3.5} & \multicolumn{3}{c}{Llama3} \\
\cmidrule(lr){3-7} \cmidrule(lr){8-11} \cmidrule(lr){12-14}
 & & 0.6B & 1.7B & 4B & 8B & 14B & 0.8B & 2B & 4B & 9B & 1B & 3B & 8B \\
\midrule
\multicolumn{14}{l}{\emph{WikiText2 perplexity ($\downarrow$)}} \\
\midrule
16 & ---            & 18.5 & 15.1 & 12.1 & 8.7 & 7.7 & 15.4 & 10.9 & 8.6 & 8.0 & 11.8 & 9.1 & 6.5 \\
\midrule
\multirow{2}{*}{4} & absmax & 21.0 & 25.6 & \textbf{12.6} & 9.1 & 7.9 & 18.7 & 12.4 & 8.9 & 9.0 & 14.0 & 9.9 & 7.0 \\
 & signed & \textbf{20.8} & \textbf{18.5} & \textbf{12.6} & \textbf{8.9} & \textbf{7.8} & \textbf{18.2} & \textbf{11.9} & \textbf{8.7} & \textbf{8.7} & \textbf{13.4} & \textbf{9.6} & \textbf{6.8} \\
\midrule
\multirow{2}{*}{3} & absmax & 40.8 & 81.6 & 17.4 & 11.0 & 9.3 & 29.3 & 23.9 & 12.8 & 10.9 & 32.3 & 15.6 & 9.8 \\
 & signed & \textbf{35.6} & \textbf{25.9} & \textbf{17.0} & \textbf{10.2} & \textbf{8.6} & \textbf{26.4} & \textbf{17.0} & \textbf{11.7} & \textbf{9.5} & \textbf{21.9} & \textbf{12.4} & \textbf{8.6} \\
\midrule
\multicolumn{14}{l}{\emph{Average zero-shot accuracy (\%, $\uparrow$)}} \\
\midrule
16 & ---            & 43.7 & 48.5 & 50.8 & 53.7 & 56.0 & 50.2 & 55.2 & 53.1 & 55.1 & 52.1 & 60.1 & 66.1 \\
\midrule
\multirow{2}{*}{4} & absmax & 43.2 & 46.3 & \textbf{49.8} & 52.7 & 55.7 & \textbf{48.3} & 53.4 & \textbf{51.8} & \textbf{54.8} & 51.4 & \textbf{59.9} & 64.9 \\
 & signed & \textbf{43.4} & \textbf{47.0} & \textbf{49.8} & \textbf{53.4} & \textbf{56.2} & 48.1 & \textbf{53.5} & 51.7 & 54.7 & \textbf{51.8} & \textbf{59.9} & \textbf{65.5} \\
\midrule
\multirow{2}{*}{3} & absmax & 39.1 & 42.4 & \textbf{47.5} & 50.5 & 53.7 & 43.6 & 48.2 & 51.5 & 53.6 & 48.3 & 55.3 & 61.0 \\
 & signed & \textbf{39.4} & \textbf{44.9} & 47.3 & \textbf{51.6} & \textbf{55.2} & \textbf{44.2} & \textbf{50.3} & \textbf{51.6} & \textbf{54.2} & \textbf{48.9} & \textbf{57.4} & \textbf{63.1} \\
\bottomrule
\end{tabular}
}
\end{table}

The results show that our sign rule (Equation~\ref{eq:sign_rule}) gives a consistent improvement over the standard absmax grid: signed absmax improves WikiText2 perplexity in 23 of the 24 configurations in Table~\ref{tbl:sign_isolation}, with gains concentrated at lower precision. This is consistent with the $2^{-2q}$ dependence in the error bound of Theorem~\ref{thm:worst-case-bound}, Equation~\ref{eq:bound-rewrite}. In particular, the largest reductions occur on Qwen3-1.7B ($81.6$ to $25.9$) and Llama3-1B ($32.3$ to $21.9$) when quantizing weights to 3 bits ($q = 3$).

The downstream zero-shot accuracy results, averaged over ARC (Challenge and Easy), HellaSwag, Winogrande, and PIQA, follow the same trend: signed absmax improves average zero-shot accuracy by $0.24$ points at INT4 and $1.11$ points at INT3. Across all 24 configurations, signed absmax improves 18, ties 2, and trails in 4; every regression is at most $0.2$ points. These results support the central claim from Section~\ref{sec:theoretical-analysis}: the sign of the scale is a small local choice but, at few-bit precision, it has measurable model-level consequences.

\paragraph{Composition with PTQ methods.}
Direct casting isolates the sign rule, but practical quantization pipelines typically include transformations, error correction, and activation quantization. For example, in \texttt{llama.cpp}, few-bit weight formats are commonly paired with dynamic \texttt{Q8\_0} activation quantization at inference. We therefore evaluate Llama3 8B as a case study under a practical PTQ pipeline. We first insert normalized Hadamard rotations into rotation-invariant regions within the model~\cite{ashkboos2024quarot} (Figure~\ref{fig:rotation-graph}). Weight grids are then calculated on the rotated weights, while activation quantization grids are computed dynamically from the rotated activations under \texttt{Q8\_0} quantization. Qronos is then applied as the adaptive rounding step to reduce the mismatch between the quantized model and its full-precision counterpart~\cite{zhang2026qronos}. We defer further algorithmic details to Appendix~\ref{sec:experiment-details}.

Table~\ref{tbl:llama8b_full} presents results under three few-bit formats in extended GGUF notation: \texttt{Q4\_0} denotes the standard symmetric format, using the absmax grid from Equation~\ref{eq:absmax}, \texttt{Q4\_0s} denotes our signed symmetric variant, using the signed absmax grid from Equation~\ref{eq:sign_rule}, and \texttt{Q4\_1} denotes the asymmetric format, using the min-max grid from Equation~\ref{eq:minmax}. We use analogous notation for Q3 and Q2.\footnote{\texttt{Q4\_0} and \texttt{Q4\_1} are standard GGUF formats; the Q3 and Q2 variants are introduced for this study (Appendix~\ref{sec:implementation}).}

\begin{table}[!t]
\centering
\caption{Composition with PTQ methods. WikiText2 perplexity ($\downarrow$), few-shot accuracy (\%, $\uparrow$), and average recovery ($\uparrow$) on Llama3 8B for few-bit GGUF formats. All quantized rows use mergeable normalized Hadamard rotations, Qronos~\cite{zhang2026qronos}, and dynamic \texttt{Q8\_0} activation quantization. Recovery is the average downstream accuracy divided by the BF16 average over all tasks.}
\label{tbl:llama8b_full}
\resizebox{\textwidth}{!}{
\begin{tabular}{l ccccccccc}
\toprule
Format & Wiki2 $\downarrow$ & ARC-C & ARC-E & HellaS & WinoG & PIQA & MMLU-R & GSM8K & Recovery \% \\
\midrule
BF16 & 6.5 & 50.1 & 77.4 & 57.4 & 65.0 & 80.4 & 61.5 & 75.4 & --- \\
\midrule
\texttt{Q4\_0}  & 6.7 & 48.7 & \textbf{76.9} & 56.9 & 63.9 & 80.0 & 59.8 & 74.8 & 98.7 \\
\texttt{Q4\_0s} & \textbf{6.6} & 49.2 & 75.9 & 56.8 & 63.5 & 80.1 & 59.8 & \textbf{75.3} & 98.6 \\
\texttt{Q4\_1}  & \textbf{6.6} & \textbf{49.7} & 75.9 & \textbf{57.4} & \textbf{64.6} & \textbf{80.3} & \textbf{60.0} & 74.2 & \textbf{98.9} \\
\midrule
\texttt{Q3\_0}  & 7.7 & 44.3 & 73.5 & 52.0 & 60.1 & 77.2 & 52.6 & 56.3 & 89.0 \\
\texttt{Q3\_0s} & 7.2 & 46.3 & 74.8 & 54.7 & 62.5 & 78.1 & 56.3 & \textbf{69.1} & 94.6 \\
\texttt{Q3\_1}  & \textbf{7.0} & \textbf{47.3} & \textbf{75.9} & \textbf{55.9} & \textbf{63.9} & \textbf{78.8} & \textbf{57.2} & \textbf{69.1} & \textbf{95.9} \\
\midrule
\texttt{Q2\_0}  & 103.1 & 20.5 & 25.4 & 25.4 & 48.4 & 53.2 & 22.8 & 0.0 & 41.9 \\
\texttt{Q2\_0s} & 17.8 & 22.4 & 41.2 & 33.0 & 50.5 & 61.6 & 22.8 & 1.1 & 49.8 \\
\texttt{Q2\_1}  & \textbf{12.0} & \textbf{29.0} & \textbf{49.6} & \textbf{40.7} & \textbf{53.4} & \textbf{67.1} & \textbf{25.7} & \textbf{3.1} & \textbf{57.5} \\
\bottomrule
\end{tabular}
}
\end{table}

\begin{table}[!t]
\centering
\caption{Composition with jointly learned rotations and grid parameters. WikiText2 perplexity ($\downarrow$), few-shot accuracy (\%, $\uparrow$), and recovery ($\uparrow$) on Llama3 8B for few-bit GGUF formats. We jointly learn rotations and grid parameters on FineWeb~\cite{penedo2024fineweb} using Table~\ref{tbl:llama8b_full} as an initialization, then freeze them before applying Qronos~\cite{zhang2026qronos} with dynamic \texttt{Q8\_0} activation quantization. Recovery is the average downstream accuracy divided by the BF16 average over all tasks.}
\label{tbl:llama8b_learned}
\resizebox{\textwidth}{!}{
\begin{tabular}{l ccccccccc}
\toprule
Format & Wiki2 $\downarrow$ & ARC-C & ARC-E & HellaS & WinoG & PIQA & MMLU-R & GSM8K & Recovery \% \\
\midrule
BF16 & 6.5 & 50.1 & 77.4 & 57.4 & 65.0 & 80.4 & 61.5 & 75.4 & --- \\
\midrule
\texttt{Q4\_0}  & 6.7 & 48.8 & \textbf{77.1} & 57.1 & \textbf{65.6} & 79.5 & 59.0 & 74.1 & 98.7 \\
\texttt{Q4\_0s} & \textbf{6.6} & \textbf{49.2} & 77.0 & 57.1 & \textbf{65.6} & \textbf{80.0} & 59.2 & 73.6 & 98.8 \\
\texttt{Q4\_1}  & \textbf{6.6} & 48.6 & 76.9 & \textbf{57.6} & 64.4 & \textbf{80.0} & \textbf{60.4} & \textbf{75.7} & \textbf{99.2} \\
\midrule
\texttt{Q3\_0}  & 7.3 & 40.2 & 71.1 & 54.8 & 62.8 & 78.1 & 55.4 & 62.6 & 90.9 \\
\texttt{Q3\_0s} & \textbf{7.1} & 45.0 & 74.9 & \textbf{55.9} & \textbf{64.7} & \textbf{79.4} & 55.9 & 69.6 & 95.3 \\
\texttt{Q3\_1}  & \textbf{7.1} & \textbf{47.3} & \textbf{76.3} & 55.7 & 63.9 & 78.9 & \textbf{58.3} & \textbf{71.4} & \textbf{96.7} \\
\midrule
\texttt{Q2\_0}  & 103.8 & 21.4 & 25.6 & 25.9 & \textbf{51.1} & 52.8 & 22.8 & 0.0 & 42.7 \\
\texttt{Q2\_0s} & 12.9 & \textbf{27.1} & \textbf{51.0} & 38.4 & 50.6 & \textbf{66.9} & \textbf{24.7} & \textbf{3.6} & \textbf{56.1} \\
\texttt{Q2\_1}  & \textbf{12.2} & 24.7 & 48.7 & \textbf{39.4} & 50.6 & 65.5 & 23.0 & 1.8 & 54.3 \\
\bottomrule
\end{tabular}
}
\end{table}

At W4, all three formats are close in accuracy. \texttt{Q4\_1} and \texttt{Q4\_0s} both achieve $6.6$ WikiText2 perplexity, while \texttt{Q4\_0} gives $6.7$. Downstream recovery is similarly saturated: $98.9\%$ for \texttt{Q4\_1}, $98.6\%$ for \texttt{Q4\_0s}, and $98.7\%$ for \texttt{Q4\_0}. In this regime, the choice between symmetric and asymmetric formats is dominated less by accuracy than by deployment cost. Since \texttt{Q4\_0s} uses the same runtime path as \texttt{Q4\_0}, it inherits the memory and throughput advantages of the symmetric format measured in Table~\ref{tbl:cost}. On Llama3 8B at W4, this corresponds to a $2.21\times$ prefill and $2.03\times$ decode throughput advantage over \texttt{Q4\_1}, while \texttt{Q4\_0s} trails \texttt{Q4\_1} by only $0.3$ recovery points.

At lower bit widths, the sign choice remains valuable but does not fully eliminate the advantage of an asymmetric zero point. At W3, signed symmetric improves the conventional symmetric baseline from $7.7$ to $7.2$ perplexity and from $89.0\%$ to $94.6\%$ recovery, closing the recovery gap to \texttt{Q3\_1} (which achieves $7.0$ perplexity and $95.9\%$ recovery) from $6.9$ to $1.3$ percentage points. At W2, the effect is even more pronounced in perplexity: \texttt{Q2\_0} collapses to $103.1$, while \texttt{Q2\_0s} reduces this to $17.8$. The asymmetric \texttt{Q2\_1} format remains stronger at $12.0$ perplexity, but signed symmetric closes the few-shot accuracy recovery gap from $15.6$ to $7.7$ percentage points.

\paragraph{Composition with learned rotations and grid parameters.}
We next ask whether the same degree of freedom remains useful when the quantization pipeline is allowed to adapt from data. To this end, we jointly learn rotations and grid parameters on FineWeb~\cite{penedo2024fineweb} at the same insertion points as the fixed pipeline (Figure~\ref{fig:rotation-graph}), using Table~\ref{tbl:llama8b_full} as an initialization, then freeze them before applying Qronos~\cite{zhang2026qronos} with dynamic \texttt{Q8\_0} activation quantization. Rotations are optimized on the Stiefel manifold using Cayley SGD as in SpinQuant~\cite{liu2025spinquant}. For the grid parameters, \texttt{Q4\_1} learns the scale and zero-point offset jointly as in LSQ+~\cite{bhalgat2020lsq+}, while \texttt{Q4\_0} and \texttt{Q4\_0s} follow LSQ~\cite{esser2019learned} with the scale constrained to be strictly positive ($s > 0$) and non-zero ($s \neq 0$), respectively. We use analogous parameterizations for Q3 and Q2, with full optimization details deferred to Appendix~\ref{sec:experiment-details}.

Table~\ref{tbl:llama8b_learned} shows that signed symmetric formats continue to improve over their conventional symmetric counterparts after joint rotation and grid learning. At W4, \texttt{Q4\_0s} slightly improves over \texttt{Q4\_0} in both WikiText2 perplexity, $6.7$ to $6.6$, and recovery, $98.7\%$ to $98.8\%$. The effect is larger at lower precision: \texttt{Q3\_0s} improves over \texttt{Q3\_0} from $7.3$ to $7.1$ perplexity and from $90.9\%$ to $95.3\%$ recovery; \texttt{Q2\_0s} reduces \texttt{Q2\_0} perplexity from $103.8$ to $12.9$ and improves recovery from $42.7\%$ to $56.1\%$. The comparison to the asymmetric format is mixed but informative: \texttt{Q4\_1} and \texttt{Q3\_1} maintain superior recovery at W4 and W3, while \texttt{Q2\_0s} slightly exceeds \texttt{Q2\_1} in aggregate recovery at W2 ($56.1\%$ versus $54.3\%$) despite worse perplexity ($12.9$ versus $12.2$). This perplexity-versus-recovery inversion is consistent with prior observations that the two metrics can decouple at low precision~\cite{zhang2026qronos, sanjeet2026mixquant}, and is analogous to overfitting: Zhang et al.~\cite{zhang2025provable} (Remark 3.8) decompose generalization error into a calibration reconstruction term and a parameter-proximity term $|w - \mathcal{Q}(w)|$, which can be at odds.

Taken together, Tables~\ref{tbl:sign_isolation}--\ref{tbl:llama8b_learned} support a consistent interpretation. The sign of the scale is a local degree of freedom with measurable model-level effects when clipping is material. Direct casting isolates this effect: signed absmax consistently improves over the conventional strictly positive absmax grid. In practical PTQ pipelines, signed symmetric quantization decouples the accuracy and systems axes: it deploys through exactly the symmetric runtime path, while its accuracy moves toward that of the asymmetric format. On Llama3 8B, it reaches the asymmetric accuracy regime at 4 bits, closes most of the recovery gap at 3 bits, and substantially improves over \texttt{Q2\_0} at 2 bits, with jointly learned grids and rotations even exceeding \texttt{Q2\_1} in aggregate few-shot accuracy recovery despite worse perplexity. Thus, the sign remains a useful degree of freedom even when rotations and quantizer parameters are learned from data, at no extra inference cost relative to the conventional symmetric format.

\section{Discussion and Conclusion}
\label{sec:conclusion}

Standard symmetric quantization constrains the scale factor to be positive by convention. We showed that allowing signed scales exposes a zero-overhead degree of freedom. The signed absmax grid uses this freedom to place the extra signed-integer endpoint on the dominant-outlier tail. Our analysis separates the role of scale magnitude, which controls rounding error, from the role of scale sign, which controls clipping. Theoretically, this separation yields a worst-case $\ell_2$ bound under which signed absmax is conditionally bound-optimal. Empirically, this degree of freedom consistently improves conventional symmetric quantization at few bits and composes with rotations, Qronos, learned grid parameters, and dynamic activation quantization.

\paragraph{Limitations and future work.}
Our analysis focuses mainly on data-free, per-group weight quantization. This is intentionally narrow: it isolates the degree of freedom and yields a closed-form rule that already captures much of the benefit observed in our PTQ pipelines. Learned rotations and grid parameters can provide additional gains, suggesting that a fully data-aware treatment of signed scales is a promising extension rather than a prerequisite for the method. More broadly, future work should evaluate signed scales across additional model families, operator types such as convolutions, quantization granularities, and deployment backends. Our results nevertheless suggest that the sign of the scale should be treated as a quantization parameter rather than fixed by convention.

\bibliographystyle{plain}
\bibliography{references}

\appendix

\section{Proofs}
\label{sec:proofs}

Throughout this appendix, we adopt the assumptions stated in Section~\ref{sec:theoretical-analysis}: the signed integer alphabet $\mathcal{A}_q := \{-2^{q-1}, \ldots, 2^{q-1}-1\}$, and the no-midpoint condition $w_i/\alpha \notin \mathbb{Z} + \tfrac{1}{2}$ for all $i \in [d]$ (a measure-zero condition under any continuous weight distribution), under which rounding to nearest is unique and satisfies $\lfloor -x\rceil = -\lfloor x\rceil$ whenever $x \notin \mathbb{Z} + \tfrac{1}{2}$.

\subsection{Proof of Lemma~\ref{lem:error-separability}}

\lemseparability*

\begin{proof}
The map $R$ does not depend on $\gamma$, so $E(w)$ is $\gamma$-invariant by construction.
Set $u_i := \gamma w_i$. Since $\lfloor -x\rceil = -\lfloor x\rceil$ under the no-midpoint assumption, $\lfloor w_i/(\gamma\alpha)\rceil = \lfloor u_i/\alpha\rceil$, so
\[
\mathcal{Q}_\gamma(w_i) = \gamma\alpha\,\clip\bigl(\lfloor u_i/\alpha\rceil;\, \mathcal{A}_q\bigr), \qquad R(w_i) = \gamma\alpha\,\lfloor u_i/\alpha\rceil.
\]

\emph{Step 1: $\mathcal{Q}_\gamma(w_i) = R(w_i)$ for $i \notin C_\gamma$.}
By Definition~\ref{def:clipped-set}, $i \notin C_\gamma$ means $\lfloor u_i/\alpha\rceil \in \mathcal{A}_q$, so the clip is inactive.

\emph{Step 2: $(w_i - \mathcal{Q}_\gamma(w_i))^2 - (w_i - R(w_i))^2 = 2\alpha(\gamma w_i - T)$ for $i \in C_\gamma$.}
For $i \in C_\gamma$, $u_i \in (M - \alpha/2, M]$, so $\lfloor u_i/\alpha\rceil = 2^{q-1}$ and the clip caps it at $2^{q-1}-1$, giving $R(w_i) = \gamma M$ and $\mathcal{Q}_\gamma(w_i) = \gamma(M - \alpha)$.
Since $(w_i - \gamma c)^2 = (u_i - c)^2$ for any constant $c$,
\[
(w_i - \mathcal{Q}_\gamma(w_i))^2 - (w_i - R(w_i))^2 = (u_i - (M - \alpha))^2 - (u_i - M)^2 = 2\alpha(u_i - T).
\]

Splitting the sum at $C_\gamma$ and applying Steps 1--2,
\begin{align*}
\|w - \mathcal{Q}_\gamma(w)\|_2^2
&= \sum_{i=1}^d \bigl(w_i - R(w_i)\bigr)^2 + \sum_{i \in C_\gamma} \Bigl[\bigl(w_i - \mathcal{Q}_\gamma(w_i)\bigr)^2 - \bigl(w_i - R(w_i)\bigr)^2\Bigr] \\
&= E(w) + 2\alpha\sum_{i \in C_\gamma}(\gamma w_i - T). \qedhere
\end{align*}
\end{proof}

\subsection{Proof of Theorem~\ref{thm:worst-case-bound}}

\thmworstcase*

\begin{proof}
Set $u_i := \gamma w_i$. Since $\gamma \in \{-1, +1\}$, multiplication by $\gamma$ preserves absolute error:
\[
|w_i - \mathcal{Q}_\gamma(w_i)| = |u_i - \mathcal{Q}_+(u_i)|,
\]
where $\mathcal{Q}_+$ denotes positive-scale symmetric quantization with step $\alpha$ on the alphabet $\mathcal{A}_q$.

If $i \notin C_\gamma$, then no clipping occurs and $\mathcal{Q}_+(u_i) = \alpha\,\lfloor u_i/\alpha \rceil$, so round-to-nearest gives $|w_i - \mathcal{Q}_\gamma(w_i)| \le \alpha/2$.

If $i \in C_\gamma$, then Definition~\ref{def:clipped-set} gives $u_i \in (T, M]$, where $T = M - \alpha/2$.
Clipping maps $u_i$ to the largest representable point $M - \alpha$, so $|w_i - \mathcal{Q}_\gamma(w_i)| = u_i - (M - \alpha) \in (\alpha/2, \alpha]$.

Therefore
\[
\bigl(w_i - \mathcal{Q}_\gamma(w_i)\bigr)^2 \le \frac{\alpha^2}{4} + \frac{3\alpha^2}{4}\,\mathbf{1}_{\{i \in C_\gamma\}},
\]
and summing over $i$ yields $\|w - \mathcal{Q}_\gamma(w)\|_2^2 \le \tfrac{\alpha^2}{4}\bigl(d + 3|C_\gamma|\bigr)$.
Substituting $\alpha = M / 2^{q-1}$ gives $\tfrac{\alpha^2}{4} = 2^{-2q}\,M^2$, and writing $|C_\gamma| = \rho_\gamma d$ yields
\[
\|w - \mathcal{Q}_\gamma(w)\|_2^2 \;\le\; 2^{-2q}\,M^2\,d\,(1 + 3\rho_\gamma). \qedhere
\]
\end{proof}

\subsection{Proof of Corollary~\ref{cor:optimal-sign}}

\coroptimalsign*

\begin{proof}
By Theorem~\ref{thm:worst-case-bound}, the worst-case bound $2^{-2q}\,\|w\|_\infty^2\,d\,(1 + 3\rho_\gamma)$ has a strictly positive prefactor independent of $\gamma$ and is strictly increasing in $\rho_\gamma = |C_\gamma|/d$, so its minimizers over $\gamma \in \{-1,+1\}$ are exactly $\argmin_{\gamma \in \{-1,+1\}} |C_\gamma|$.
Since $\gamma$ takes only the two values $\pm 1$, the condition $|C_{\gamma^\star}| \le |C_{-\gamma^\star}|$ implies that $\gamma^\star$ minimizes $|C_\gamma|$, and therefore minimizes the bound.
\end{proof}

\subsection{Proof of Theorem~\ref{thm:signed-asymmetric}}

\thmsignedasymmetric*

\begin{proof}
For fixed \((s, z)\), the quantizer \(\mathcal{Q}(\cdot; s, z)\) returns the unique nearest point in the grid \(s(\mathcal{A}_q - z)\) whenever midpoint ties are excluded.
Hence, the representable grid of \(\mathcal{Q}(\cdot; -\alpha, 0)\) is \(-\alpha\,\mathcal{A}_q\), while the representable grid of \(\mathcal{Q}(\cdot; \alpha, -1)\) is \(\alpha(\mathcal{A}_q + 1)\).
Since \(\mathcal{A}_q = \{-2^{q-1}, \ldots, 2^{q-1}-1\}\), these grids coincide:
\[
-\alpha\,\mathcal{A}_q \;=\; \alpha(\mathcal{A}_q + 1) \;=\; \alpha \cdot \{-(2^{q-1}-1), \ldots, 2^{q-1}\}.
\]
Therefore, \(\mathcal{Q}(\cdot; -\alpha, 0)\) and \(\mathcal{Q}(\cdot; \alpha, -1)\) have the same representable grid, and under the no-midpoint-tie assumption they agree pointwise:
\[
\mathcal{Q}(w; -\alpha, 0) = \mathcal{Q}(w; \alpha, -1) \quad \text{for all } w \in \mathbb{R} \text{ with } w/\alpha \notin \mathbb{Z} + \tfrac{1}{2}. \qedhere
\]
\end{proof}

\section{Implementation Details for \texttt{llama.cpp}}
\label{sec:implementation}

This appendix expands on the inference measurements in Table~\ref{tbl:cost}. The goal is to quantify the systems cost of the deployed quantization format, independent of our sign selection rule (Section~\ref{sec:signed_sym}). In particular, the measurements compare symmetric and asymmetric GGUF formats with matched group size and bit width. Since signed symmetric quantization uses the same deployed format as the standard symmetric quantization, these measurements apply to both; the sign rule changes scale selection offline, not the runtime kernel path.

We consider two standard 4-bit GGUF formats: \texttt{Q4\_1} (asymmetric) and \texttt{Q4\_0} (symmetric).
We also contribute two new 2-bit GGUF formats, \texttt{Q2\_1} (asymmetric) and \texttt{Q2\_0} (symmetric), enabling direct comparison of signed symmetric and asymmetric quantization at INT2 precision.
We benchmark CPU inference throughput using the \texttt{llama.cpp} framework~\cite{llama_cpp} on an AMD EPYC\texttrademark{} ``Turin'' 128-core processor with the ZenDNN(L) backend v1.0.0.

\subsection{Formats and block layout}
\label{sec:impl-formats}

Consistent with the \texttt{Q4\_0} and \texttt{Q4\_1} standard, all formats use a block size of 32 weights.
For Q2, each block packs 32 two-bit indices into 8~bytes, with four indices per byte.
For Q4, each block packs 32 four-bit indices into 16~bytes, with two indices per byte.

\begin{center}
\begin{tabular}{lll}
\toprule
Format & Fields & Size \\
\midrule
\texttt{Q2\_0} (symmetric)  & \texttt{s} (FP16) + \texttt{qs[8]}                      & 10 bytes \\
\texttt{Q2\_1} (asymmetric) & \texttt{s} (FP16) + \texttt{m} (FP16) + \texttt{qs[8]}  & 12 bytes \\
\texttt{Q4\_0} (symmetric)  & \texttt{s} (FP16) + \texttt{qs[16]}                     & 18 bytes \\
\texttt{Q4\_1} (asymmetric) & \texttt{s} (FP16) + \texttt{m} (FP16) + \texttt{qs[16]} & 20 bytes \\
\bottomrule
\end{tabular}
\end{center}

\noindent
\texttt{Q4\_0} and \texttt{Q2\_0} store a single scale $s = \gamma\alpha$, which is strictly positive by convention but may be negative under our implementation.
\texttt{Q4\_1} and \texttt{Q2\_1} also store a scale $s$ and minimum $m$.

\subsection{Measurement setup}
\label{sec:impl-setup}

\paragraph{Hardware.}
CPU experiments run on an AMD EPYC\texttrademark{}~9755 (Zen~5, ``Turin'') system with
128 physical cores per socket (256 logical), two NUMA nodes, and 1.5~TB DDR5.
The CPU supports AVX-512 (F/BW/DQ/VL), AVX-512VNNI, AVXVNNIINT8, AVX2, and~FMA.

\paragraph{Software and builds.}
All experiments use \texttt{llama.cpp} at commit \texttt{b1be68e8} (master, ggml-org/llama.cpp),
compiled with GCC~14.3.0 and CMake~4.2.1 on Ubuntu~22.04 (Linux~5.15) in \texttt{Release}
mode with \texttt{GGML\_NATIVE=ON} and \texttt{GGML\_OPENMP=ON}, enabling
\texttt{-march=native} and all ISA extensions supported by the target CPU.
We evaluate two configurations from this same commit:
(i) an \emph{unmodified upstream} baseline build, supporting BF16, \texttt{Q4\_0}, and \texttt{Q4\_1} with
no custom kernels or code modifications, and
(ii) a \emph{modified build} that adds \texttt{Q2\_0}/\texttt{Q2\_1} quantization support and AVX-512--optimized
dot product kernels for all quantized types.
In the baseline build, \texttt{Q4\_0}/\texttt{Q4\_1} dot product kernels do not have explicit AVX-512 paths and execute via 256-bit AVX2 VNNI instructions, while other operations (\textit{e.g.}, attention, RMSNorm, SiLU, softmax) use existing 512-bit AVX-512 kernels.
Models are converted from HuggingFace safetensors to BF16 GGUF and then quantized to \texttt{Q4\_0}, \texttt{Q4\_1}, \texttt{Q2\_0}, and \texttt{Q2\_1}. However, we note that one could similarly export Brevitas-quantized models that are compatible with the \texttt{llama.cpp} runtime.

\paragraph{Benchmarking and metrics.}
Throughput is measured using \texttt{llama-bench}.
We evaluate a 512-token prefill followed by 512-token autoregressive decode,
with prefill batch size 512 and 128 threads.
All CPU runs are pinned to a single NUMA node using
\texttt{numactl --cpunodebind=0 --membind=0},
mapping one thread per physical core and avoiding cross-NUMA traffic.
We report tokens/s for: (i) \emph{prefill (pp512)}, a compute-bound phase dominated by matrix
multiplications, and (ii) \emph{decode (tg512)}, a memory-bandwidth-bound phase.
Each experiment performs one warmup iteration followed by 20 timed repetitions; reported results are the mean over repetitions. Across all reported configurations, over 90\% exhibit a coefficient of variation (standard deviation divided by mean) below 6\%.

\subsection{Kernel-level cost gap}
\label{sec:impl-mechanism}

The throughput comparison is between deployed formats, not between sign selection policies.
Unsigned symmetric and signed symmetric quantization both deploy as a symmetric format with $z = 0$; the only difference is the sign assigned to each per-group scale.
When scales are stored as signed real values, the sign introduces no additional metadata.
Therefore, Table~\ref{tbl:cost} characterizes the runtime profile inherited by signed symmetric quantization from the symmetric family.

\paragraph{Unpacking overhead.}
After loading 8~bytes of packed 2-bit indices from a Q2 format, the kernel must expand them to 32 individual bytes in element order.
This requires broadcasting the 8-byte value to a 256-bit register, extracting four shift-and groups, and interleaving the results with two \texttt{unpacklo\_epi8} and one \texttt{unpacklo\_epi16}, approximately 10~SIMD instructions per block.
By comparison, the 4-bit nibble unpack (\texttt{bytes\_from\_nibbles\_32}) needs only a 128-bit load, one shift, and one mask: approximately 3~instructions.
This $3\times$ difference in unpack cost is a fixed overhead shared by both \texttt{Q2\_0} and \texttt{Q2\_1}, and it is the primary reason the symmetric-over-asymmetric throughput advantage is smaller at 2~bits than at 4~bits (Table~\ref{tbl:cost}).
On the AVX-512 path, a dual-block unpack primitive loads two blocks' \texttt{qs} fields into a single 128-bit register and produces a 512-bit result (64~bytes), halving the number of unpack calls per iteration.

\paragraph{Zero-point cost in quantized GEMM.}
The asymmetric format also adds work in the GEMM itself~\cite{jacob2018quantization, krishnamoorthi2018quantizing}. Consider one output element $Y_{ik} = \sum_j X_{ij} W_{kj}$ with $W$ quantized in groups of size $g$, so that within group $b$, $W_{kj} \approx s_{kb}(W_{q,kj} - z_{kb})$. Expanding,
\begin{equation*}
Y_{ik} \;\approx\; \sum_b s_{kb} \underbrace{\sum_{j \in b} X_{ij}\, W_{q,kj}}_{\text{integer GEMM}} \;-\; \sum_b s_{kb} z_{kb} \underbrace{\sum_{j \in b} X_{ij}}_{\text{row-sum correction}}.
\end{equation*}
Both formats compute the integer GEMM term. The asymmetric format ($z \ne 0$) additionally accumulates a per-block row-sum of $X$ scaled by $sz$ per output, while the symmetric format ($z = 0$) skips this term entirely.

\section{Additional Experiment Details}
\label{sec:experiment-details}

This appendix provides implementation details for the experiments in Section~\ref{sec:results}. We separate the quantization graph from the quantization pipeline. The graph specifies where rotations and quantizers are inserted into the model, while the pipeline specifies how grid parameters, rotations, and rounding decisions are chosen.

Across all experiments, we quantize linear projection weights using group size 32, matching the GGUF block convention used by \texttt{llama.cpp}. Unless otherwise stated (\textit{i.e.}, Table~\ref{tbl:sign_isolation}), we evaluate models with dynamic \texttt{Q8\_0} activation quantization. Weight grids are calculated (or learned) per group, while activation grids are computed dynamically from the corresponding (potentially transformed) activations per-token during each forward pass.

\subsection{Direct casting}
\label{sec:exp-direct}

The direct-casting experiments in Table~\ref{tbl:sign_isolation} isolate the effect of the scale-sign rule. We quantize all linear projection weights groupwise with group size 32 and do not use rotations, error correction, learned grid parameters, calibration data, or activation quantization. The conventional absmax baseline uses the signed integer alphabet with zero point $z = 0$, strictly positive scale $s = \alpha > 0$, and $\alpha = \|w\|_\infty / 2^{q-1}$ per group. The signed absmax variant uses the same alphabet and zero point, but sets $s = \gamma\alpha$, where $\gamma \in \{+1, -1\}$ is chosen by Equation~\ref{eq:sign_rule}. All values are rounded to nearest and clipped to the target integer alphabet.

\subsection{Algorithm details}
\label{sec:exp-algo}

\begin{figure}[!t]
\centering
\includegraphics[width=\textwidth]{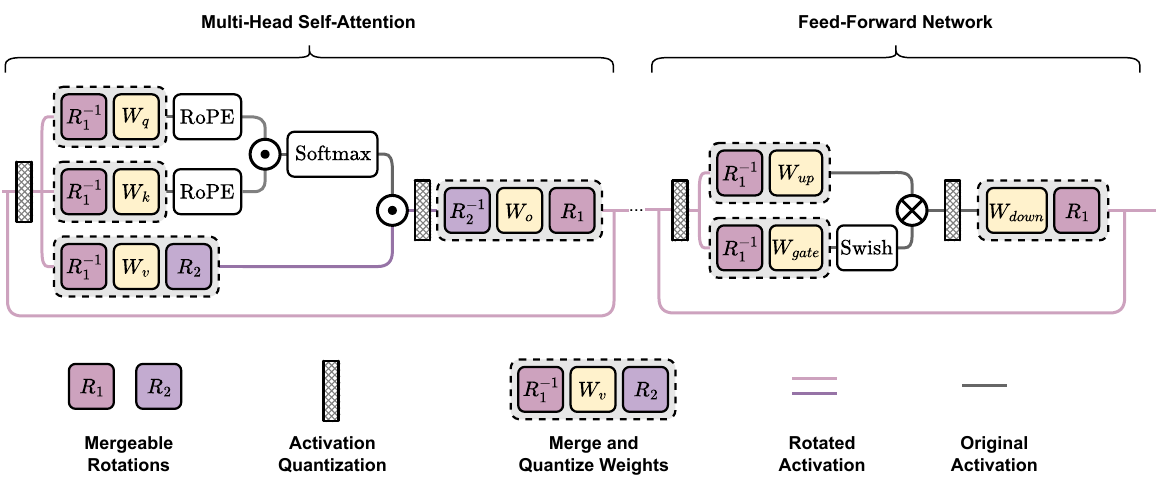}
\caption{We illustrated the mergeable rotation graph used by the PTQ pipelines behind Tables~\ref{tbl:llama8b_full} and~\ref{tbl:llama8b_learned}, with full pipeline details in Appendices~\ref{sec:exp-algo} and~\ref{sec:exp-learned}. Normalized Hadamard rotations are inserted only at rotation-invariant points where they fold into adjacent linear projections at deployment, leaving the graph architecture, and therefore the \texttt{llama.cpp} runtime contract, unchanged. This figure is adapted from Liu et al.~\cite{liu2025spinquant}}
\label{fig:rotation-graph}
\end{figure}

The experiments in Table~\ref{tbl:llama8b_full} evaluate whether the signed grid remains useful when composed with a practical PTQ pipeline. We first insert normalized Hadamard rotations into rotation-invariant regions of the model only where rotations can be fully merged, as shown in Figure~\ref{fig:rotation-graph}. Note that the \texttt{llama.cpp} runtime assumes a fixed contract on the model (\textit{i.e.}, no changes to the graph architecture and therefore no online rotations can be added), which is a notable departure from QuaRot~\cite{ashkboos2024quarot} and SpinQuant~\cite{liu2025spinquant}. These rotations are merged before grid calculation, so weight grids are calculated based on the rotated weights. Activations are quantized dynamically with \texttt{Q8\_0}, with activation quantization grids computed from the rotated activations. In \texttt{llama.cpp}~\cite{llama_cpp}, \texttt{Q8\_0} on activations assumes a ``narrow range'' quantizer, where activations are scaled to be symmetric such that $\mathcal{A}_q = \{-K, \ldots, K\}$ for $K = 2^{q-1} - 1$. We emulate this quantizer within Brevitas during calibration.

After the rotations and grids are fixed, we apply Qronos as the adaptive rounding method~\cite{zhang2026qronos}. Qronos is run with the same transformed graph and dynamic activation-quantized execution path used for evaluation, so that the rounding decisions account for the mismatch between the quantized model and its full-precision counterpart under \texttt{Q8\_0} activation quantization.

For the studied formats, \texttt{Q4\_0} uses the conventional symmetric grid with $z = 0$ and $s > 0$. \texttt{Q4\_0s} uses the same signed alphabet and $z = 0$, but allows $s = \gamma\alpha$ with $\gamma \in \{+1, -1\}$. \texttt{Q4\_1} uses the corresponding asymmetric min-max grid. We use analogous definitions for Q3 and Q2.

For Qronos, we use 128 calibration sequences of length 2048. The covariance matrix is damped as in Qronos, with damping parameter $\lambda = 10^{-4}\,\sigma_1$ where $\sigma_1$ is the largest singular value of the covariance matrix. We quantize weights in descending order of the diagonal of the covariance matrix~\cite{gptq}. All Qronos hyperparameters are held fixed across \texttt{Q4\_1}, \texttt{Q4\_0s}, and \texttt{Q4\_0} at the same bit width.

\subsection{Jointly learned rotations and grid parameters}
\label{sec:exp-learned}

Table~\ref{tbl:llama8b_learned} evaluates a data-driven pipeline in which rotations and grid parameters are learned jointly before applying Qronos. The goal is to test whether the signed scale remains useful when the quantization pipeline is allowed to adapt from data, rather than relying only on the closed-form signed absmax rule.

We initialize rotations and grid parameters from the fixed-rotation pipeline in Table~\ref{tbl:llama8b_full}. Starting from this initialization, we jointly optimize the rotations and grid parameters for 100 steps on FineWeb~\cite{penedo2024fineweb}, using batches of 8 sequences. Rotations are optimized on the Stiefel manifold using Cayley SGD, following SpinQuant~\cite{liu2025spinquant}. Quantized weights and activations are simulated during optimization using the straight-through estimator~\cite{bengio2013estimating, courbariaux2015binaryconnect}.

The three grid families differ only in their parameter constraints. For \texttt{Q4\_1}, we learn the scale and zero-point offset jointly, as in LSQ+~\cite{bhalgat2020lsq+}. We follow LSQ~\cite{esser2019learned} when learning scales for \texttt{Q4\_0} and \texttt{Q4\_0s}, constraining the scale to be strictly positive ($s > 0$) for \texttt{Q4\_0} and non-zero ($s \neq 0$) for \texttt{Q4\_0s}, with the latter initialized from the signed absmax grid. We use analogous parameterizations for Q3 and Q2. After the 100-step joint optimization, the learned rotations are merged and the grid parameters are frozen, and Qronos is applied using the same adaptive rounding setup described in Appendix~\ref{sec:exp-algo}.

\subsection{Implementation notes}
\label{sec:exp-impl}

All quantization experiments are implemented using Brevitas~\cite{brevitas} and run on an AMD Instinct\texttrademark{} MI325X GPU with 256~GB of GPU memory. The learned grid experiments use the same quantization graph as the fixed-rotation experiments; only the procedure for choosing rotations and grid parameters changes. In particular, \texttt{Q4\_0s} and \texttt{Q4\_0} share the same deployed symmetric inference path in \texttt{llama.cpp}. The signed variant changes the sign of the stored scale but does not introduce a zero point or any additional runtime arithmetic.

\end{document}